\documentclass[10pt]{article} % For LaTeX2e
\usepackage[preprint]{tmlr}
% If accepted, instead use the following line for the camera-ready submission:
%\usepackage[accepted]{tmlr}
% To de-anonymize and remove mentions to TMLR (for example for posting to preprint servers), instead use the following:
%\usepackage[preprint]{tmlr}

% Optional math commands from https://github.com/goodfeli/dlbook_notation.
%%%%% NEW MATH DEFINITIONS %%%%%

\usepackage{amsmath,amsfonts,bm}

% Mark sections of captions for referring to divisions of figures

% Highlight a newly defined term

% Figure reference, lower-case.

% Figure reference, capital. For start of sentence

% Section reference, lower-case.

% Section reference, capital.

% Reference to two sections.

% Reference to three sections.

% Reference to an equation, lower-case.
\def\eqref#1{equation~\ref{#1}}
% Reference to an equation, upper case

% A raw reference to an equation---avoid using if possible

% Reference to a chapter, lower-case.

% Reference to an equation, upper case.

% Reference to a range of chapters

% Reference to an algorithm, lower-case.

% Reference to an algorithm, upper case.

% Reference to a part, lower case

% Reference to a part, upper case

\def\1{\bm{1}}

% Random variables

% rm is already a command, just don't name any random variables m

% Random vectors

% Elements of random vectors

% Random matrices

% Elements of random matrices

% Vectors

% Elements of vectors

% Matrix

% Tensor
\DeclareMathAlphabet{\mathsfit}{\encodingdefault}{\sfdefault}{m}{sl}
\SetMathAlphabet{\mathsfit}{bold}{\encodingdefault}{\sfdefault}{bx}{n}

% Graph

% Sets

% Don't use a set called E, because this would be the same as our symbol
% for expectation.

% Entries of a matrix

% entries of a tensor
% Same font as tensor, without \bm wrapper

% The true underlying data generating distribution

% The empirical distribution defined by the training set

% The model distribution

% Stochastic autoencoder distributions

 % Laplace distribution

% Wolfram Mathworld says $L^2$ is for function spaces and $\ell^2$ is for vectors
% But then they seem to use $L^2$ for vectors throughout the site, and so does
% wikipedia.

 % See usage in notation.tex. Chosen to match Daphne's book.

\usepackage{hyperref}
\usepackage{url}
\usepackage{graphicx}
\usepackage{subcaption}
\usepackage{amssymb}            % Defines common symbols like \mathbb R
\usepackage{mathtools}          % Extends amsmath, providing common math tools
\usepackage{mathrsfs}           % Enables \mathscr, which can work in cases that \mathcal does not
%\mathtoolsset{showonlyrefs}     % Only number equations that are referenced (optional)
\usepackage{subcaption}         % Allows for the use of subfigures and subcaptions
\usepackage[space]{grffile}     % For spaces in image names

\usepackage{url}
\usepackage{natbib}

\usepackage{booktabs, array}

\usepackage{wrapfig}

\usepackage{amsthm}
\usepackage{algorithm}% http://ctan.org/pkg/algorithm
\usepackage{algpseudocode}% http://ctan.org/pkg/algorithmicx
\makeatletter
% This is the vertical rule that is inserted
\def\therule{\makebox[\algorithmicindent][l]{\hspace*{.5em}\vrule height .75\baselineskip depth .25\baselineskip}}%

\newtoks\therules% Contains rules
\therules={}% Start with empty token list
\def\appendto#1#2{\expandafter#1\expandafter{\the#1#2}}% Append to token list
\def\gobblefirst#1{% Remove (first) from token list
  #1\expandafter\expandafter\expandafter{\expandafter\@gobble\the#1}}%
\def\LState{\State\unskip\the\therules}% New line-state
\def\pushindent{\appendto\therules\therule}%
\def\popindent{\gobblefirst\therules}%
\def\printindent{\unskip\the\therules}%
\def\printandpush{\printindent\pushindent}%
\def\popandprint{\popindent\printindent}%

%      ***      DECLARED LOOPS      ***
% (from algpseudocode.sty)
\algdef{SE}[WHILE]{While}{EndWhile}[1]
  {\printandpush\algorithmicwhile\ #1\ \algorithmicdo}
  {\popandprint\algorithmicend\ \algorithmicwhile}%
\algdef{SE}[FOR]{For}{EndFor}[1]
  {\printandpush\algorithmicfor\ #1\ \algorithmicdo}
  {\popandprint\algorithmicend\ \algorithmicfor}%
\algdef{S}[FOR]{ForAll}[1]
  {\printindent\algorithmicforall\ #1\ \algorithmicdo}%
\algdef{SE}[LOOP]{Loop}{EndLoop}
  {\printandpush\algorithmicloop}
  {\popandprint\algorithmicend\ \algorithmicloop}%
\algdef{SE}[REPEAT]{Repeat}{Until}
  {\printandpush\algorithmicrepeat}[1]
  {\popandprint\algorithmicuntil\ #1}%
\algdef{SE}[IF]{If}{EndIf}[1]
  {\printandpush\algorithmicif\ #1\ \algorithmicthen}
  {\popandprint\algorithmicend\ \algorithmicif}%
\algdef{C}[IF]{IF}{ElsIf}[1]
  {\popandprint\pushindent\ algorithmicelse\ \algorithmicif\ #1\ \algorithmicthen}%
\algdef{Ce}[ELSE]{IF}{Else}{EndIf}
  {\popandprint\pushindent\algorithmicelse}%
\algdef{SE}[PROCEDURE]{Procedure}{EndProcedure}[2]
   {\printandpush\algorithmicprocedure\ \textproc{#1}\ifthenelse{\equal{#2}{}}{}{(#2)}}%
   {\popandprint\algorithmicend\ \algorithmicprocedure}%
\algdef{SE}[FUNCTION]{Function}{EndFunction}[2]
   {\printandpush\algorithmicfunction\ \textproc{#1}\ifthenelse{\equal{#2}{}}{}{(#2)}}%
   {\popandprint\algorithmicend\ \algorithmicfunction}%
\makeatother

\usepackage{mathtools}

\DeclarePairedDelimiterX{\infdivx}[2]{(}{)}{%
  #1\;\delimsize\|\;#2%
}
\newcommand{\klb}{D_{\text{KL-B}}\infdivx}
\DeclarePairedDelimiter{\norm}{\lVert}{\rVert}

\newcommand{\kl}{D_{\text{KL}}\infdivx}
\newcommand{\divf}{\mathbb{D}_{f}\infdivx}

\usepackage{thmtools, thm-restate}
\usepackage{amsmath}
\usepackage{amssymb}
\usepackage{amsthm}
\theoremstyle{plain}
\newtheorem{theorem}{Theorem}[section]

\newtheorem{corollary}[theorem]{Corollary}
\theoremstyle{definition}

\theoremstyle{remark}

\title{DITTO: Offline Imitation Learning with World Models}

% Authors must not appear in the submitted version. They should be hidden
% as long as the tmlr package is used without the [accepted] or [preprint] options.
% Non-anonymous submissions will be rejected without review.
\author{Branton DeMoss\thanks{bdemoss@robots.ox.ac.uk}, Paul Duckworth, Jakob Foerster, Nick Hawes, Ingmar Posner \\
  University of Oxford\\
}

% The \author macro works with any number of authors. Use \AND 
% to separate the names and addresses of multiple authors.

  % Insert correct month for camera-ready version
 % Insert correct year for camera-ready version
 % Insert correct link to OpenReview for camera-ready version

\begin{document}

\maketitle

\begin{abstract}
For imitation learning algorithms to scale to real-world challenges, they must handle high-dimensional observations, offline learning, and policy-induced covariate-shift. We propose DITTO, an offline imitation learning algorithm which addresses all three of these problems. DITTO optimizes a novel distance metric in the latent space of a learned world model: First, we train a world model on all available trajectory data, then, the imitation agent is unrolled from expert start states in the learned model, and penalized for its latent divergence from the expert dataset over multiple time steps. We optimize this multi-step latent divergence using standard reinforcement learning algorithms, which provably induces imitation learning, and empirically achieves state-of-the art performance and sample efficiency on a range of Atari environments from pixels, without any online environment access. We also adapt other standard imitation learning algorithms to the world model setting, and show that this considerably improves their performance. Our results show how creative use of world models can lead to a simple, robust, and highly-performant policy-learning framework.
\end{abstract}

\section{Introduction}\label{intro}

% Why we need learning-based policies for complex environments, why RL inappropriate for some tasks.
%Generating agents which can capably act in complex environments is challenging. In the most difficult environments, hand-designed controllers are often insufficient so learning-based methods must be used to achieve good performance. 
% 
% When we can exactly specify the goals and constraints of the problem using a reward function, reinforcement learning (RL) offers an approach which has been extremely successful at solving a range of complex tasks, such as the strategy games of Go \citep{alphago}, Starcraft \citep{alphastar}, and poker \citep{rebel}, and difficult real world control problems like quadrupedal locomotion \citep{cerberus}, datacenter cooling \citep{datacenter}, and chip placement \citep{chiprl}. RL lifts the human designer's work from explicitly designing a good policy, to designing a good reward function. However, this optimization often results in policies that maximize reward in undesirable ways that their designers did not intend \citep{specHacking} - the so-called \emph{reward hacking} phenomenon \citep{rewardhacking}.
% %Maybe remove this sentence. Or just drop whole point about reward hacking/alignment problems?
% To combat this, practitioners spend substantial effort observing agent failure modes and tuning reward functions with extra regularization terms and weighting hyperparameters to counteract undesirable behaviors \citep{interactivedriving} \citep{DeepLoco}.
% 
% What is IL, where it falls short for MDPs
Imitation learning (IL) is an approach to policy learning which bypasses reward specification by directly mimicking the behavior of an expert demonstrator. The simplest kind of IL, behavior cloning (BC), trains an agent to predict an expert's actions from observations, then acts on these predictions at test time. 
%Specifically, BC maximizes the likelihood of taking the expert action in states drawn from the expert distribution $\rho^{E}$
%\begin{equation}\label{behavior cloning}
%    \max_\pi \mathbb{E}_{(s,a) \sim \rho^E}\left[\log\left( \pi(a|s)\right)  \right]
%\end{equation}
However, this approach fails to account for the sequential nature of decision problems, since decisions at the current step affect which states are seen later, breaking the IID assumption of standard supervised learning algorithms. The distribution of states seen at test time will differ from those seen during training unless the expert training data covers the entire state space, or the agent makes no mistakes. This distribution mismatch, or covariate shift, leads to a compounding error problem: initially small prediction errors lead to small changes in state distribution, which lead to larger errors, and eventual departure from the training distribution altogether \citep{ALVINN}. Intuitively, the agent has not learned how to act under its own induced distribution. This was formalized in the seminal work of \cite{Bagnell1}, who gave a tight regret bound on the difference in return achieved by expert and learner, which is quadratic in the episode length for BC.

% Online vs on-policy.
Follow-up work in \cite{Bagnell2} showed that a linear bound on regret can be achieved if the agent learns online in an interactive setting with the expert: Since the agent is trained \textit{under its own distribution} with expert corrections, there is no distribution mismatch at test-time. This works well when online learning is safe and expert supervision is possible, but is untenable in many real-world use-cases such as robotics, where online learning can be unsafe, time-consuming, or otherwise infeasible. This captures a major open challenge in imitation learning: on one hand, we want to generate data on-policy to avoid covariate shift, but on the other hand, we may not be able to learn online due to safety or other concerns. The algorithm we present, DITTO, solves this \emph{offline imitation learning} challenge in a way that scales to high-dimensional observations such as pixels. To the best of our knowledge, we are the first to solve these Atari benchmarks in the imitation learning setting where we train completely offline, from pixels alone, and consistently recover expert performance. 

% World models to estimate d^\pi(s)
\cite{worldmodels} propose a two-stage approach to policy learning, where agents first learn to predict the environment dynamics with a recurrent neural network called a ``world model" (WM), and then learn the policy inside the WM alone. This approach is desirable since it enables on-policy learning \emph{offline}, in the world model. Similar model-based learning methods have recently achieved success in the adjacent setting of online RL \citep{dreamerv2}, and impressive zero-shot transfer of policies trained solely in the WM to physical robots \citep{DayDreamer}.

When learning a policy, it is important to quantify how well the policy generalizes to unseen inputs. However, in imitation learning there is a conceptual difficulty in measuring generalization performance: Although we could evaluate policy performance on held-out expert state-action pairs (e.g. by measuring prediction accuracy), this fails to reflect the performance we should expect from the policy at test-time, because we are not evaluating the policy in the state distribution it will be acting under, namely \textit{its own induced distribution}. World models offer a solution to this problem: given a learned dynamics model, we can perform roll-outs with our learned policy \textit{from expert starting states}, and measure the divergence in the latent space \textit{over multiple time-steps}. This gives us an offline, on-policy divergence measure which captures the sequential nature of the imitation learning decision problem. This is the key insight underlying the strong empirical performance of DITTO, and forms the basis of our theoretical contribution. To confirm the generalization properties of our algorithm, we evaluate in environments for which there is an extrinsic reward function (which the imitation learner does not have access to), and study the relationship between our proposed divergence measure and the extrinsic reward. As shown in Figure \ref{fig:results3}, we find that off-policy metrics like expert action prediction accuracy are not predictive of final return in the environment, whereas our on-policy latent divergence measure is predictive of extrinsic return.

We combine the above insights to propose a new imitation learning algorithm called \textbf{Dream Imitation (DITTO)}.
DITTO leverages learned world models in a novel way to induce reward-free imitation learning: first, expert trajectories are mapped into the model latent space; then, an imitation policy is unrolled from arbitrary expert start states in the model latent space. Finally, we directly compare the \emph{on-policy} rollouts to the expert trajectories in the model latent space using a simple, fixed distance function defined in the latent space. Our key insight is that the world model latent space provides a natural trajectory divergence measure which we can do imitation learning over, without any intermediary learned reward model or discriminator. This enables robust imitation learning in a domain-agnostic way. Using this divergence measure, DITTO casts the offline imitation learning challenge as an online RL problem in a learned world model.

%by training an agent using on-policy RL inside a learned world model using a reward based on the divergence between between the agent and expert demonstrations. 
% Specifically, we define a reward which measures divergence between the agent and expert demonstrations in the latent space of the world model, and show that optimizing this reward with RL induces imitation. 
% We discuss the relationship between our method and the imitation learning as divergence minimization framework \citep{divmin}, and show that our method optimizes a similar bound without requiring adversarial training. 
%We evaluate DITTO on challenging IL benchmarks. Since existing methods -- behavior cloning (BC) and generative adversarial imitation learning (GAIL, \citet{GAIL}) -- fail to achieve competitive performance on pixel-based observations in these benchmarks, we adapt them to the world model setting. This results in \textit{novel} algorithms (D-BC, D-GAIL) which significantly outperform their non-adapted forms, but are outperformed by DITTO.

% compare our method against ), which we adapt to the world model setting, and show that we achieve better performance and sample 

% efficiency in challenging environments from pixels alone.

Our main contributions are summarized as follows:
\begin{itemize}
    \item We introduce DITTO, a novel imitation learning framework which leverages learnt world models to cast offline imitation as online RL in the model latent space.

    \item We show that the latent space induced by dynamics learning provides a natural and generic divergence measure for policy learning. The latent distance function we demonstrate is simpler, more robust, and results in more performant policy learning compared to learned discriminators, such as those used in adversarial imitation learning.

    \item We demonstrate DITTO outperforms standard imitation learning baselines by a significant margin - to the best of our knowledge we are the first to consistently recover expert performance in the Atari benchmarks we study from pixels, completely offline.
    
    \item To provide a more comprehensive evaluation of the thus far underexplored area of offline imitation learning from pixels, we present two novel extensions of baseline IL algorithms (BC, GAIL) to the world model, offline setting (D-BC, D-GAIL).
\end{itemize}

%%%%%%%%%%%%%%%%%%%%%%%%%%%%%%%%%%%%%%%%%%%%%%%%%%%%%%%%%%%%%%%%%%%%%%%%%%%%%%%%%%%%%%%%%%%%%%%%%%%%%%%%%%%%%%%%%%%%%%%%%%%%%%%%%%%%%%%%%%%%%%%%%%%%%%%%%%%%%%
\section{Related Work}\label{related}
\subsection{Imitation Learning}
Imitation learning algorithms can be classified according to the set of resources needed to produce a good policy. \citet{Bagnell2} give strong theoretical and empirical results in the online interactive setting, which assumes that we can both learn while acting online in the real environment, and that we can interactively query an expert policy to e.g.~provide the learner with the optimal action in the current state. Follow-up works have progressively relaxed the resource assumptions needed to produce good policies. \citet{noisybc} show that the optimal policy can be recovered with a modified form of BC when learning from imperfect demonstrations, given a constraint on the expert sub-optimality bound. \citet{DRIL} study covariate shift in the online, non-interactive setting, and demonstrate an approximately linear regret bound by jointly optimizing the BC objective with a novel policy ensemble uncertainty cost, which encourages the learner to return to and stay in the distribution of expert support. They achieve this by augmenting the BC objective with the following uncertainty cost term:
\begin{equation}
    \text{Var}_{\pi \sim \Pi_E}\left( \pi(a|s)\right) = \frac{1}{E} \sum_{i=1}^E
    (\pi_i(a|s) - \frac{1}{E} \sum_{j=1}^E \pi_j(a|s)
    )^2
\end{equation}
This term measures the total variance of a policy ensemble $\Pi_E = \left\{ \pi_1, ..., \pi_E\right\}$ trained on disjoint subsets of the expert data. They optimize the combined BC plus uncertainty objective using standard online RL algorithms, and show that this mitigates covariate shift.

Inverse reinforcement learning (IRL) can achieve improved performance over BC by first learning a reward from the expert demonstrations for which the expert is optimal, then optimizing that reward with on-policy reinforcement learning. This two-step process, which includes on-policy RL in the second step, helps IRL methods mitigate covariate shift due to train and test distribution mismatches. However, the learned reward function can fail to generalize outside of the distribution of expert states which form its support. 

One line of work treats IRL as \emph{divergence minimization}: instead of directly copying the expert actions, they minimize a divergence measure between expert and learner state distributions
\begin{equation}\label{divergence}
    \min_\pi \mathbb{D}\left( \rho^\pi, \rho^E\right)
\end{equation}
where $\rho^\pi(s,a) = (1-\gamma)\sum_{t=0}^\infty \gamma^t P(s_t=s, a_t=a)$ is the discounted state-action distribution induced by $\pi$, and $\mathbb{D}$ is a divergence measure between probability distributions. The popular GAIL algorithm \citep{GAIL} constructs a minimax game in the style of GANs \citep{gans} between the learner policy $\pi$, and a discriminator $D_\psi$ which learns to distinguish between expert and learner state distributions
\begin{equation}\label{GAN}
    \max_\pi \min_{D_\psi} \mathbb{E}_{(s,a) \sim \rho^E} 
    \left[ - \log D_\psi(s,a)\right] + \mathbb{E}_{(s,a) \sim \rho^\pi} 
    \left[ -\log\left( 1-D_\psi(s,a)\right)\right]
\end{equation}
 This formulation minimizes the Jensen-Shannon divergence between the expert and learner policies, and bounds the expected return difference between agent and expert. However, \citet{randomexpert} point out that adversarial reward learning is inherently unstable since the discriminator is always trained to penalize the learner state-action distribution, even if the learner has converged to the expert policy. This finding is consistent with earlier work \citep{brock2018large} which observed discriminator overfitting, necessitating early stopping to prevent training collapse. Multiple works have reported failure getting GAIL to work with high-dimensional observations, such as those in the pixel-based environments we study \citep{DRIL} \citep{sqil}.
 
 To combat problems with adversarial training, \citet{randomexpert} and \citet{sqil} consider reducing IL to RL on an intrinsic reward
 \begin{equation}
     r(s,a) = \begin{cases}
             1  & \text{if } (s,a) \in \mathcal{D}^E \\
             0  & \text{otherwise}
       \end{cases} \quad
 \end{equation}
where $\mathcal{D}^E$ is the expert dataset. While this sparse formulation is impractical e.g. in continuous action settings, they show that a generalization of the intrinsic reward using support estimation by random network distillation \citep{rnd} results in stable learning that matches the performance of GAIL without adversarial training. \citet{ILbyRL} showed that this formulation is equivalent to divergence minimization under the total variation distance, and produced a bound on the difference in extrinsic reward achieved between the expert and a learner trained with this approach.

%Other approaches to imitation include the DICE family of methods \citep{ValueDICE} \citep{DemoDICE}, which directly minimize an off-policy estimate of the state divergence between expert and learner, and optimal transport methods \citep{OptimalTF23}, which minimize the Wasserstein distance between the expert's and learner's state-action distributions. These methods are effective when low-dimensional proprioceptive states and actions are available, but have not yet demonstrated strong performance in high-dimensional observation environments, such as the pixel-based ones we study.

%\subsection{Offline Learning}
%\citet{KumarOffPolicy} showed how na\"ive application of Bellman backups in off-policy RL results in incorrect, overly optimistic value estimation. These off-policy bootstrapping errors are further compounded in the model-based setting, since states where either the learned dynamics or reward functions generalize poorly will be found and incorrectly backed out by a na\"ive Q-learner. This is acceptable in the online setting, since areas of state space which the agent is overly-optimistic about will tend to be visited, resulting in natural corrections to inaccuracies; but the offline setting does not share this property. To counteract these problems, offline RL methods either constrain the learner policy to stay on the expert distribution directly \citep{Wu2019BehaviorRO}, or use pessimistic methods to discourage value estimates from becoming too large out of distribution \citep{CQL} \citep{RAMBO-RL}. 

 \subsection{World Models}
% V-MAIL similar ideas but with online interaction, masks data dependence on expert vs online interaction data needed.
% Need to address the medium data, no interaction safety critical regime of physical robots (AVs)

% Offline MRBL suffers from a double difficult covariate shift - dynamics generalization, and reward generalization on top. Exacerbated by the fact that the Bellman backup searches for global max on reward, this is basically an adversarial search for generalization corner cases.

% WM Decouples dyanmics learning from policy quality, unlocks use of tons of low-quality policy data for Model training.
World models have recently emerged as a promising approach to model-based learning. \citet{worldmodels} defined the prototypical two-part model: a variational autoencoder (VAE) is trained to reconstruct observations from individual frames, while a recurrent state-space model (RSSM) is trained to predict the VAE encoding of the next observation, given the current latent state and action. World models can be used to train agents entirely inside the learned latent space, without the need for expensive decoding back to the observation space. \citet{dreamer} introduced Dreamer, an RL agent which is trained purely in the latent space of the WM, and successfully transfers to the true environment at test-time. \citet{DayDreamer} showed that the same approach can be used to simultaneously learn a model and agent policy to control a physical quadrupedal robot online, without the control errors usually associated with transferring policies trained only in simulation to a physical system \citep{actuatornet}.

In this work, we propose the use of world models to address a number of common problems in imitation learning. Intrinsic rewards which induce imitation learning, like those introduced in \citet{sqil} and \citet{randomexpert}, can pose challenging online learning problems, since the rewards are sparse or require tricky additional training procedures to work in high-dimensional observation spaces. Similarly, approaches like GAIL \citep{GAIL} and AIRL \citep{AIRL} require adversarial on-policy training that is difficult to make work in practice. \cite{vmail} propose an approach similar to ours, which uses model-based rollouts to produce on-policy latent trajectories to train the policy. However, their reward model is learned via an adversarial objective, and so can in principle suffer from the same adversarial collapse issues mentioned above. In contrast, our approach remedies both the online learning and reward specification problems by performing on-policy learning offline, in the latent space of the world model, and uses a natural divergence measure as reward: distance between learner and expert in the world model latent space. This provides a conceptually simple and dense reward signal for imitation by reinforcement learning, which we find outperforms competitive approaches in data efficiency and asymptotic performance.
%%%%%%%%%%%%%%%%%%%%%%%%%%%%%%%%%%%%%%%%%%%%%%%%%%%%%%%%%%%%%%%%%%%%%%%%%%%%%%%%%%%%%%%%%%%%%%%%%%%%%%%%%%%%%%%%%%%%%%%%%%%%%%%%%%%%%%%%%%%%%%%%%%%%%%%%%%%%%%
\section{Dream Imitation}

% Ask good questions in this paper. It isn't only about giving good answers. How much should we trust our WM to generalize? Can we measure that? Online version naturally finds place where agent is overly optimistic and collect data there. How can we know where those points are?
We study imitation learning in a partially observable Markov decision process (POMDP) with discrete time-steps and actions, and high dimensional observations generated by an unknown environment. The POMDP $\mathcal{M}$ is composed of the tuple $\mathcal{M} = (\mathcal{S},\mathcal{A},\mathcal{X},\mathcal{R},\mathcal{T},\mathcal{U},\gamma)$, where $s \in \mathcal{S}$ is the state space, $a \in \mathcal{A}$ is the action space, $x \in \mathcal{X}$ is the observation space, $\gamma$ is the discount factor, and $r = \mathcal{R}(s,a)$ is the reward function. The transition dynamics are Markovian, and given by $s_{t+1} \sim \mathcal{T}(\cdot\mid s_t, a_t)$. The agent does not have access to the underlying states, and only receives observations represented by $x_t \sim \mathcal{U}(\cdot\mid s)$. The goal is to maximize the discounted sum of extrinsic (environment) rewards $\mathbb{E}[\Sigma_{t}\gamma^t r_t]$, which the agent does not have access to.

Training proceeds in two parts: we first learn a world model from recorded sequences of observations, then train an actor-critic agent to imitate the expert in the world model. The latent dynamics of the world model define a fully observable Markov decision process (MDP), since the model states $\hat{s}_t$ are Markovian. Model-based rollouts always begin from an observation drawn from the expert demonstrations, and continue for a fixed set of time steps $H$, the agent training horizon. The agent is rewarded for matching the latent trajectory of the expert.

\subsection{Preliminaries}
We show that bounding the learner-expert state distribution divergence in the world model also bounds their return difference in the actual environment, and connect our method to the \emph{IL as divergence minimization} framework \citep{divmin}. \citet{vmail} showed that for a learned dynamics model $\widehat{\mathcal{T}}$ whose total variation from the true transitions is bounded such that 
$\mathbb{D}_{\text{TV}}(\mathcal{T}(s,a), \widehat{\mathcal{T}}(s,a)) \leq \alpha \quad \forall (s,a) \in \mathcal{S}\times\mathcal{A}$ and $R_{\text{max}} = \max_{(s,a)}\mathcal{R}(s,a)$ then
\begin{equation}\label{returnbound}
    \left| \mathcal{J}(\pi^E, \mathcal{M}) - \mathcal{J}(\pi, \mathcal{M}) \right| \leq 
    \underbrace{\alpha \frac{R_{\text{max}}}{(1-\gamma)^2}}_{\text{learning error}} +
    \underbrace{\frac{R_{\text{max}}}{1-\gamma} \mathbb{D}_{\text{TV}}\left( \rho^E_\mathcal{M}, \rho^\pi_{\widehat{\mathcal{M}}}\right)}_{\text{adaptation error}}
\end{equation}
where $\mathcal{J}(\pi,\mathcal{M})$ is the expected return of policy $\pi$ in MDP $\mathcal{M}$, and $\mathcal{\widehat{M}}$ is the ``imagination MDP" induced by the world model. This implies the difference between the expert return and the learner return in the true environment is bounded by two terms, 1) a term proportional to the model approximation error $\alpha$, which could in principle be reduced with more data, and 2) a model domain adaptation error term, which captures the generalization error of a model trained under data from one policy, and deployed under another. \citet{vmail} also show that bounding the divergence between \emph{latent} distributions upper bounds the true state distribution divergence. Formally, given a latent representation of the transition history $z_t = q(x_{\leq t}, a_{<t})$ and a belief distribution $P(s_t \mid x_{\leq t}, a_{<t}) = P(s_t \mid z_t)$, then if the policy conditions only on the latent representation $z_t$ such that the belief distribution is independent of the current action $P(s_t \mid z_t, a_t) = P(s_t \mid z_t)$, then the divergence between the latent state distribution of the expert and learner upper bounds the divergence between their true state distribution:
\begin{equation}\label{latentbounds}
    \divf{\rho^{\pi}_{\mathcal{M}}(x,a)}{\rho^{E}_{\mathcal{M}}(x,a)} \leq \divf{\rho^{\pi}_{\mathcal{M}}(s,a)}{\rho^{E}_{\mathcal{M}}(s,a)} \leq \divf{\rho^{\pi}_{\mathcal{M}}(z,a)}{\rho^{E}_{\mathcal{M}}(z,a)}
\end{equation}
Where $\mathbb{D}_{f}$ is a generic $f$-divergence , e.g. KL or TV. This result, along with equation \ref{returnbound}, suggests that minimizing divergence in the model latent space is sufficient to bound the expected expert-learner return difference.

\begin{figure}
    \centering
    \includegraphics[width=0.6\textwidth]{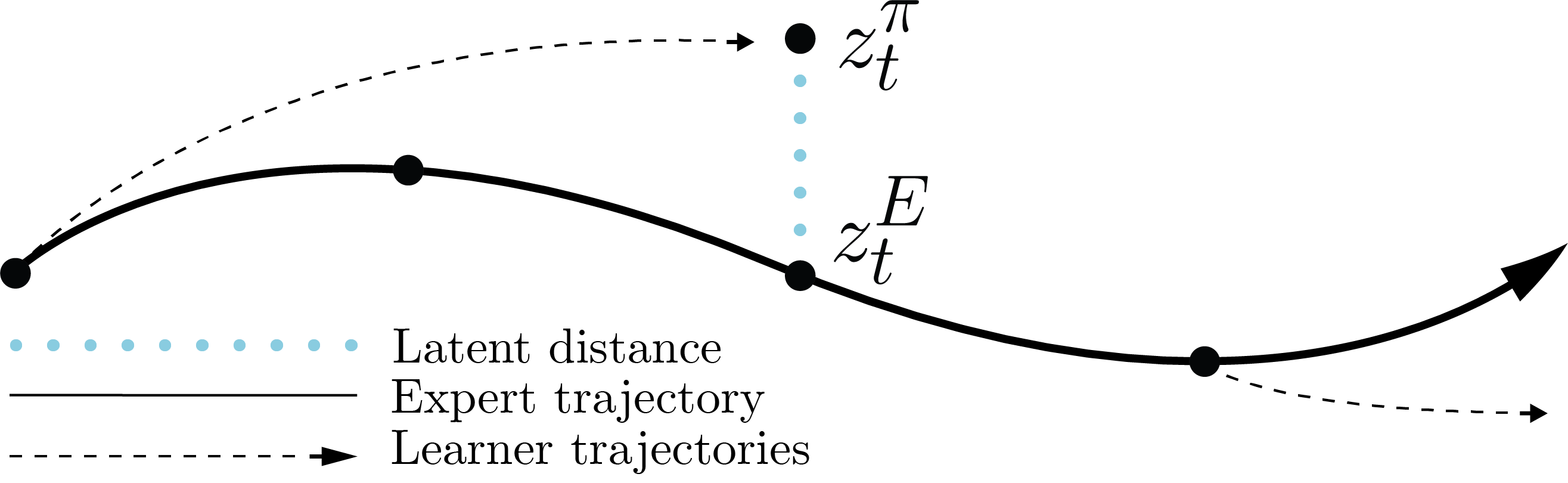}
    \caption{The learner begins from random expert latent states during training, and generates on-policy latent trajectories in the world model. The intrinsic reward \ref{intreward} encourages the learner to recover from its mistakes over multiple time steps to match the expert trajectory.}
    \label{fig:latenttrajectory}
\end{figure}

\textbf{Reward} \hspace{0.3cm} To bound expert-learner state distribution divergences, prior approaches have focused on sparse indicator function rewards \citep{ILbyRL}, or adversarial reward learning \citep{divmin}. We propose a new formulation, which rewards the agent for matching the expert latent state-action pairs over an episode. 
% In particular, we start the agent from a recorded expert observation which has been encoded into the world model latent space, then unroll the agent on-policy in the world model, as shown in Figure \ref{fig:latenttrajectory}. Our reward is a modified inner product defined over the world model latent states:
In particular, for an arbitrary distance function $d$, agent state-action latent $z^\pi_t$, and a set of expert state-action latents $\mathcal{D}_E$:
\begin{equation}\label{theoreticalreward}
    r^{int}_t(z^\pi_t) = 1 - \min_{z^E \in \mathcal{D}_E} d(z^\pi_t, z^E)
\end{equation}

Any function of this form rewards matching the agent's state-action pairs to the expert's, as studied in \citet{ILbyRL}. The major differences in our formulation are that we calculate the reward on the learned model latent states, as well as compute a simple smoothed divergence, meaning an exact match isn't required for a reward. Proof \ref{appendixproof} in the supplementary material shows how to make this relaxed reward compatible with the theoretical results from \citet{ILbyRL}, such that an exact divergence bound is obtained. In particular, we prove that maximizing this reward bounds the total variation in latent-state distributions between the expert and learner, as well as bounding their extrinsic reward difference.

% Our intrinsic reward can be viewed as a reformulation of the indicator reward studied in \citet{ILbyRL} for the world model's latent space. In particular, we show that maximizing this reward induces a bound on the total variation between the learner and expert state distributions, satisfying equation \ref{latentbounds}
% \begin{lemma}
% Let $D$ be an expert demonstration dataset with $|D| = N \geq \max\{800|S|, 450 \log(\frac{2}{\delta})\}\tau_{\text{mix}}^3\eta^{-2}$. Then under certain assumptions (see Appendix), maximizing an intrinsic reward $R_{\text{int}}'$ which satisfies (for some $\epsilon > 0$):
% \begin{align*}
%     &R_{\text{int}}'(s,a) = 1 , \forall (s,a)\in D\\
%     & 0 \leq R_{\text{int}}'(s,a) \leq 1 - \epsilon, \text{otherwise} \
% \end{align*}
% induces a bound between the imitation learner and expert state distributions and expected reward. In particular, for $\rho^J$ the limiting state-action distribution of the imitation learner,
% \begin{equation*}
%     ||\rho^J - \rho^E||_{\text{TV}} \leq \frac{\eta}{\epsilon}
% \end{equation*}
% \begin{equation*}
%     \mathbb{E}_{\rho^J}[R] \geq  \mathbb{E}_{\rho^E}[R] -  \frac{\eta}{\epsilon}
% \end{equation*}
% \end{lemma}

Intuitively, matching latent states between the learner and expert is easier than matching observations, since the representations learned from generative world model training should provide a much richer signal of state similarity. 
% Empirically we compute this reward over both a continuous and discrete latent channel, but restrict our analysis to the discrete case.
In practice, the minimization over $\mathcal{D}_E$ can be computationally expensive, so we modify the objective \ref{theoreticalreward} to exactly match learner latent states to expert latents from the same time-step, as shown in Figure \ref{fig:latenttrajectory}. In particular, we randomly sample consecutive expert latents $z^E_{t:t+H} $ from $\mathcal{D}_E$ and unroll the agent from the same starting state in the world model, yielding a sequence of agent latents $z^\pi_{t:t+H}$.  Finally, we compute a reward at each step $t$ as follows:

\begin{equation}\label{intreward}
    r^{int}_t(z^E_t, z^\pi_t) = 1 - d(z^E_t ,z^\pi_t) =  \frac{z^E_t \cdot z^\pi_t}{\max(\|z^E_t\|, \|z^\pi_t\|)^2}
\end{equation}

This formulation changes our method from distribution matching to mode seeking, since states frequently visited by the expert will receive greater reward in expectation. We found that this modified dot product reward empirically outperformed $L_2$ and cosine-similarity metrics.

\begin{algorithm}
    \caption{Dream Imitation (DITTO)}
    \label{DreamImitation}
    \begin{algorithmic}[1] % The number tells where the line numbering should start
        \State \textbf{Require} demonstration data $\mathcal{D} = \left\{ (x_t, a_t, x_{t+1})_{t=0}^{\norm{e_n}} \mid n \in N\right\}$
        \State Initialize world model parameters $\phi$
        \While{\emph{not converged}} \Comment{World model learning}
            \State Draw $B_{wm}$ transition sequences $\{ (x_t, a_t, x_{t+1})_{t=k}^{k+L}\} \sim \mathcal{D}$
            \State Compute all sequential RSSM components according to eqn \ref{modelcomponents}
            \State Update $\phi$ with ELBO loss \ref{ELBO}
        \EndWhile

        \State Initialize actor and critic parameters $\theta$, $\psi$
        \While{\emph{not converged}} \Comment{Agent training}
            \State Draw $B_{ac}$ expert latent state sequences $(\hat{s}^E_\tau) \sim \hat{\mathcal{D}}^E$
            \State Generate trajectories $(\hat{s}^\pi_\tau, a_\tau)_{\tau =t}^{t+H}$ with $a_\tau \sim \pi_\theta(\cdot\mid \hat{s}_\tau)$
            \State Compute rewards $r^\text{int}_\tau(\hat{s}^\pi_\tau, \hat{s}^E_\tau)$ and values $v_\psi(\hat{s}^\pi_\tau)$
            \State Compute $\lambda$-returns $V^{\lambda}_\tau = r_t + \gamma \left( (1-\lambda)v(\hat{s}^\pi_{\tau+1}) + \lambda V^\lambda_{\tau+1}\right), \quad V^\lambda_{\tau+H} = v(\hat{s}^\pi_{\tau+H})$
            \State Update critic on $\lambda$-targets: $\sum_{\tau=t}^{t+H}\tfrac{1}{2} ( v_\psi(\hat{s}^\pi_\tau) - sg(V^\lambda_\tau))^2$
            \State Update actor with eqn \ref{actorloss}
            
        \EndWhile
    \end{algorithmic}
\end{algorithm}

\subsection{Algorithm}\label{agenttheory}
\textbf{Dataset} \hspace{0.3cm} World model training can be performed using datasets generated by policies of any quality, since the model only predicts transition dynamics. The transition dataset is composed of $N$ episodes $e_n$ of sequences of observations $x_t$, actions $a_t$: $\mathcal{D} = \{ (x_t, a_t)_{t=0}^{\norm{e_n}} \mid n \in N\}$.

\textbf{World model architecture} \hspace{0.3cm} We adapt the architecture proposed by \citet{dreamerv2}, which is composed of an image encoder, a recurrent state-space model (RSSM) which learns the transition dynamics, and a decoder which reconstructs observations from the compact latent states. The encoder uses a convolutional neural network (CNN) to produce representations, while the decoder is a transposed CNN. The RSSM predicts a sequence of length $T$ deterministic recurrent states $(h_t)_{t=0}^T$, each of which are used to parameterize two distributions over stochastic hidden states. The stochastic posterior state $z_t$ is a function of the current observation $x_t$ and recurrent state $h_t$, while the stochastic prior state $\hat{z}_t$ is trained to match the posterior without access to the current observation. The current observation is reconstructed from the full model state, which is the concatenation of the deterministic and stochastic states $\hat{s}_t = (h_t, z_t)$. For further details, see the model architecture section in the appendix.

%\begin{figure}
%    \centering
%    \includegraphics[width=0.7\textwidth]{iclr2023/DITTOpacman.png}
%    \caption{Policy training episodes begin from arbitrary expert latent %states $\hat{s}^E_t$, and continue for $H$ steps. The learner policy %$\pi_\theta$ predicts an action from the current model state $\hat{s}_t$. The %current model state and policy action are used to predict the next model %state. The intrinsic reward $r^{int}$ \ref{intreward} measures the agreement %between the current expert and policy model states. At test-time, the policy %is composed with the world model encoder and RSSM to act from pixel %observations alone.}
%    \label{fig:DITTO}
%\end{figure}
%\textbf{Latent dataset}\hspace{0.3cm} After world model training we encode the expert dataset into a sequence of latent states $\hat{\mathcal{D}}^E = \{ (\hat{s}_t)_{t=0}^{\norm{e^*_n}}\}$ for agent behavior learning. During agent training we initialize model episodes at random expert latent states and draw batches of sequences of length $H$.

\textbf{Agent architecture}\hspace{0.3cm} We use a standard stochastic actor-critic architecture with an entropy bonus. The actor observes Markovian recurrent states from the world model, and produces distributions over its action space, which we sample from to get actions. The critic regresses the $\lambda$-target \citep{suttonbarto}, computed from the sum of intrinsic rewards with a value bootstrap at the episode horizon. For further details, see the agent architecture section in the appendix.

\textbf{Algorithm}\hspace{0.3cm} Learning proceeds in two phases: First, we train the WM on all available demonstration data using the ELBO objective \ref{ELBO}. Next, we encode expert demonstrations into the world model latent space, and use the on-policy actor critic algorithm described above to optimize the intrinsic reward \ref{intreward}, which measures the divergence between agent and expert over time in latent space. In principle, any on-policy RL algorithm could be used in place of actor-critic. We describe the full procedure in Algorithm \ref{DreamImitation}.

\section{Experiments}
To the best of our knowledge, we are the first to consistently recover expert performance in the pixel-based environments we study in the offline setting. Prior works generally focus on improving behavior cloning \citep{noisybc}, or study a mixed setting with some online interactions allowed \citep{vmail} \citep{mobile}. To demonstrate the effectiveness of world models for imitation learning, we train without any interaction with the true environment, nor any reward information.

Recent state-of-the-art imitation learning algorithms \citep{noisybc} \citep{DemoDICE} \citep{ValueDICE} have mostly been limited in evaluation to low-dimensional perfect state observation environments. To test the effectiveness of world models for policy learning that can scale to partially-observable, high-dimensional observation environments, such as robotic manipulation from video feeds, we evaluate on difficult pixel-based environments. We test in standard pixel-based Atari environments considered by recent SOTA online methods, e.g. \cite{DRIL} \citep{sqil}. We evaluate on a subset of the Atari domain for which strong baseline experts are available from the RL Baselines Zoo repository \citep{rl-zoo3}, as well as a pixel-based continuous control environment.

\subsection{Agents}
To test the performance of our algorithm, we compare DITTO to a standard baseline method, behavior cloning, and to two methods which we introduce in the world model setting.

\textbf{Behavior cloning}\hspace{0.3cm} We train a BC model end-to-end from pixels, using a convolutional neural network architecture. Compared to prior works which study behavior cloning from pixels in Atari games \citep{Hester2017}\citep{atarihead}\citep{BCbench}, our baseline implementation achieves stronger results, even in games where it is trained with lower-scoring data. 

\textbf{Dream agents}\hspace{0.3cm} We adapt GAIL \citep{GAIL} and BC to the world model setting, which we dub D-GAIL and D-BC respectively. D-GAIL and D-BC both receive world model latent states instead of pixel observations. The D-BC agent is trained with maximum-likelihood estimation on the expert demonstrations in latent space, with an additional entropy regularization term which we found stabilized learning:
\begin{equation}
    L_{BC} = \mathbb{E}_{(\hat{s},a) \sim \hat{\mathcal{D}}^E}\left[-\log\left( \pi(a|\hat{s})\right)  -\eta_{BC} H(\pi(\hat{s}))\right]
\end{equation}
The D-GAIL agent is trained on-policy in the world model using the adversarial objective from Equation \ref{GAN}. The D-GAIL agent optimizes its learned adversarial reward with the same actor-critic formulation used by DITTO, described in Section \ref{agenttheory}. D-GAIL is essentially identical to VMAIL, proposed by \cite{vmail}, except that we use the world model of Dreamerv2 \citep{dreamerv2}. We train both DITTO and D-GAIL with a fixed horizon of $H=15$. At test-time, the model-based agent policies are composed with the world model encoder and RSSM to convert high-dimensional observations into latent representations. 

All model-based policies in our experiments use an identical multi-layer perceptron (MLP) architecture for fair comparison in terms of the policies' representation capacity, while the BC agent is parameterized by a stacked CNN and MLP architecture which mirrors the world model encoder plus agent policy. We found that D-GAIL was far more stable than expected, since prior works \citep{sqil} \citep{DRIL} reported negative results training GAIL from pixels in the easier online setting. This suggests that world models may be beneficial for representation learning even in the online case, and that other online algorithms could be improved with world model pre-training, followed by policy training in the latent space.

We evaluate our algorithm and baselines on 5 Atari environments, and one continuous control environment, using strong PPO agents \citep{PPO} from the RL Baselines3 Zoo \citep{rl-zoo3} as expert demonstrators, using $N^E=\{4,8,15,30,60,125,250,500,1000\}$ expert episodes to train the agent policies in the world model. To train the world models, we generate 1000 episodes from a pre-trained policy, either PPO or advantage actor-critic (A2C) \citep{mnih2016}, which achieves substantially lower reward compared to PPO. Surprisingly, we found that the A2C and PPO-trained world models performed similarly, and that only the quality of the imitation episodes affected final performance. We hypothesize that this is because the A2C and PPO-generated datasets provide similar coverage of the environment. It appears that the world model can learn environment dynamics from broad classes of datasets as long as they cover the state distribution well. The data-generating policy's quality is relevant for imitation learning, but appears not to be for dynamics learning, apart from coverage.

\begin{figure}[ht]
    \centering
    \includegraphics[width=0.32\textwidth]{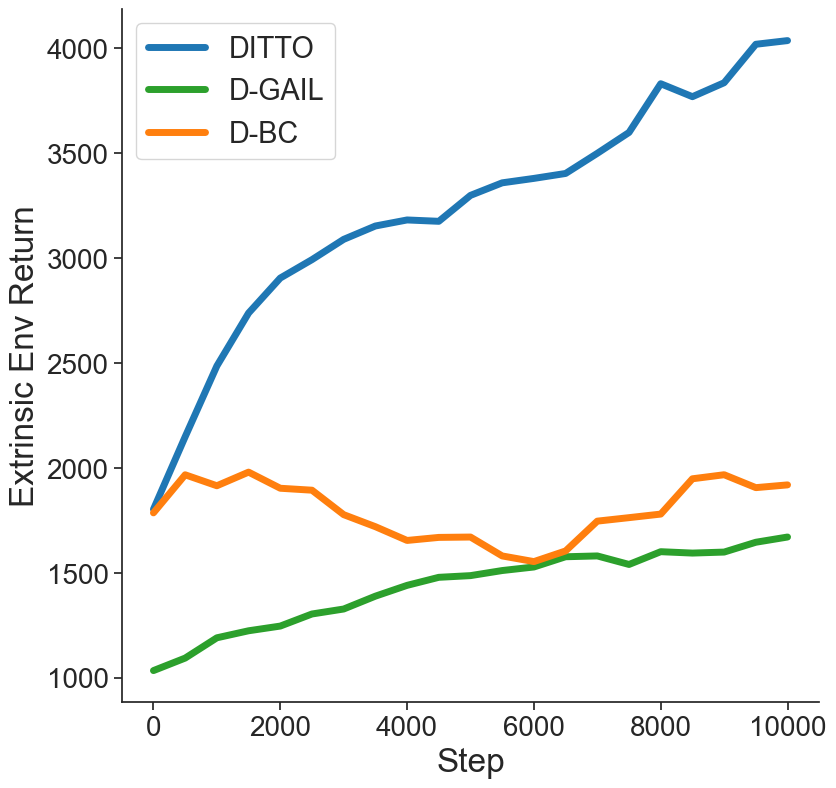}
    \includegraphics[width=0.32\textwidth]{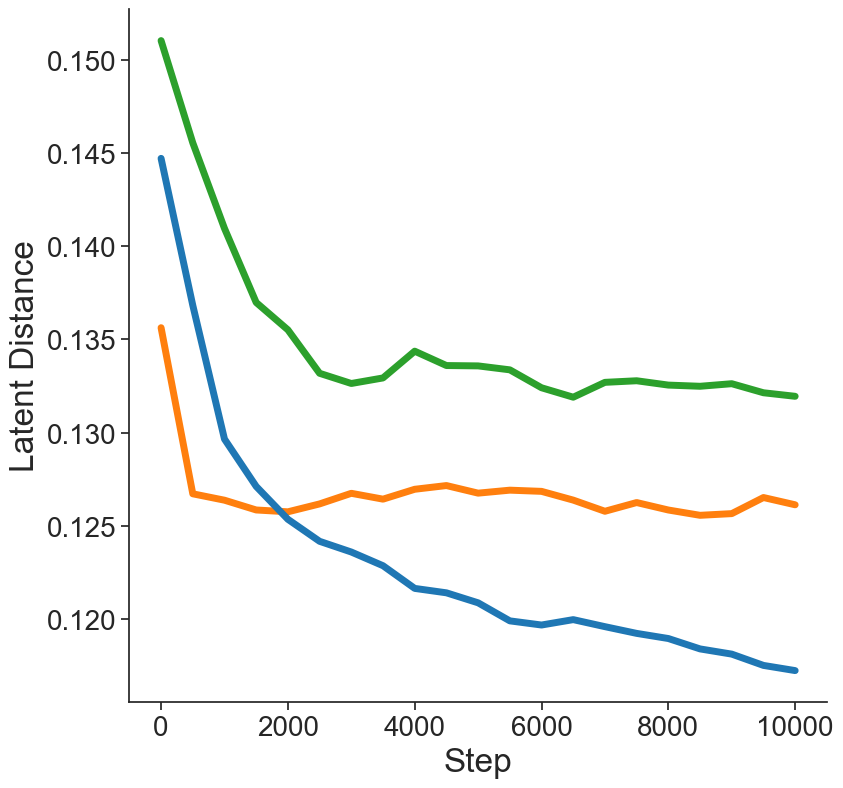}
    \includegraphics[width=0.32\textwidth]{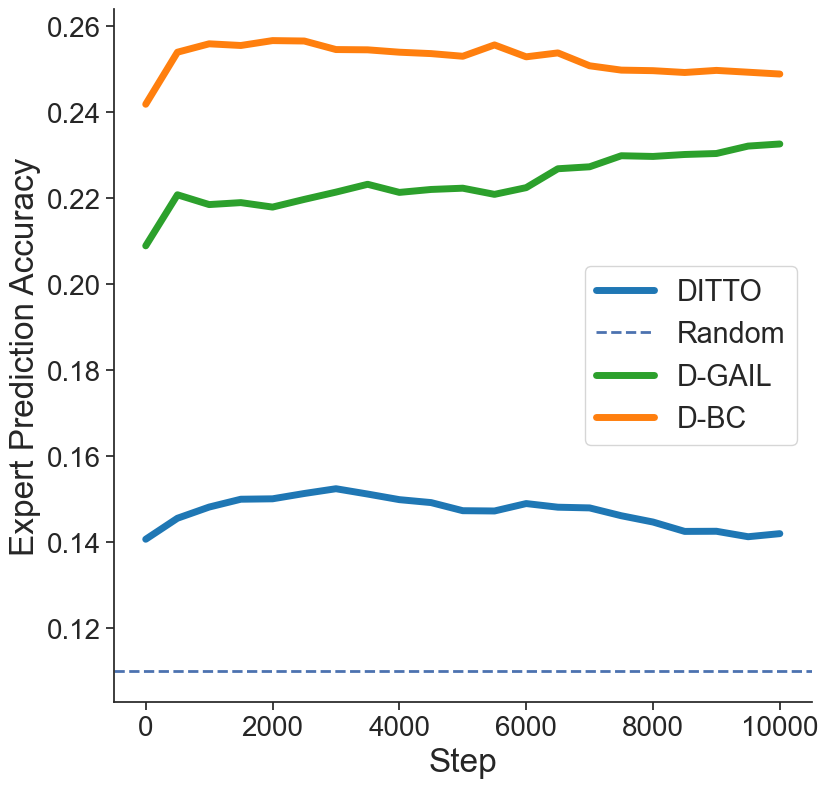}
    \caption{We compare mean extrinsic reward from rollouts in the true environment (BeamRider) throughout agent training (\textbf{left}) to agents' mean \textit{latent distance} from the expert (\textbf{center}), and mean expert action prediction accuracy (\textbf{right}). Both latent distance and accuracy are calculated on held-out expert trajectories used for validation. Latent distance is defined as $L_d=1-r_{int}$. DITTO explicitly minimizes this quantity, and achieves the greatest generalization performance in the true environment. Perfect agreement with the expert would result in $L_d = 0$, but this is impossible to achieve since the world model is stochastic. Counter-intuitively, expert action prediction accuracy is \textit{negatively} correlated with generalization performance in the true environment.}
    \label{fig:results3}
\end{figure}

\subsection{Results}
We are interested in pushing imitation learning towards real-world deployments, which necessitate dealing with high-dimensional observations and offline learning, as mentioned in section \ref{intro}. Estimating out-of-distribution imitation performance is particularly difficult in the offline setting, since by definition we do not have expert data there and cannot compare what our agent does to what an expert \textit{would have done}. This highlights a flaw with standard offline imitation metrics such as expert action prediction accuracy, which only tell us about the learner's performance in the expert's distribution, and may not be predictive of the learner's performance under its own induced distribution. 

Figure \ref{fig:results3} shows the performance of different algorithms throughout training in the true environment, contrasted with two imitation metrics: latent distance, which we propose as a more robust measure of generalization performance for imitation; and expert action prediction accuracy, a standard imitation benchmark which is meant to capture generalization capability. DITTO achieves the lowest latent distance from expert under its own distribution in the world model. We find that counter-intuitively, \textit{action prediction accuracy is negatively correlated with actual environment (i.e.\ extrinsic) performance}, whereas our latent distance measure \textit{is} predictive of performance in the environment. This supports our hypothesis that metrics which are limited to evaluation in the expert distribution are inadequate for predicting the performance of imitation learners when deployed to the true environment, since they neglect the sequential nature of decision problems and the subsequent policy-induced covariate shift. Our results suggest that action prediction accuracy in the expert’s distribution does not measure generalization performance.

Figure \ref{fig:results} plots the performance of DITTO against our proposed world model baselines and standard BC. In MsPacman and Qbert, most methods recover expert performance with the least amount of data we tested, and are tightly clustered, suggesting these environments are easier to learn good policies in, even with little data. D-GAIL exhibited adversarial collapse twice in MsPacman, an improvement over standard GAIL, which exhibits adversarial collapse uniformly in prior works which study imitation learning from pixels in Atari \citep{sqil}\citep{DRIL}. In contrast, DITTO always recovers or exceeds average expert performance in all tested environments, and matches or outperforms the baselines in terms of both sample efficiency and asymptotic performance. Further results and ablations can be found in the appendix.

\begin{figure}[t]
    \centering
    \includegraphics[width=0.45\textwidth]{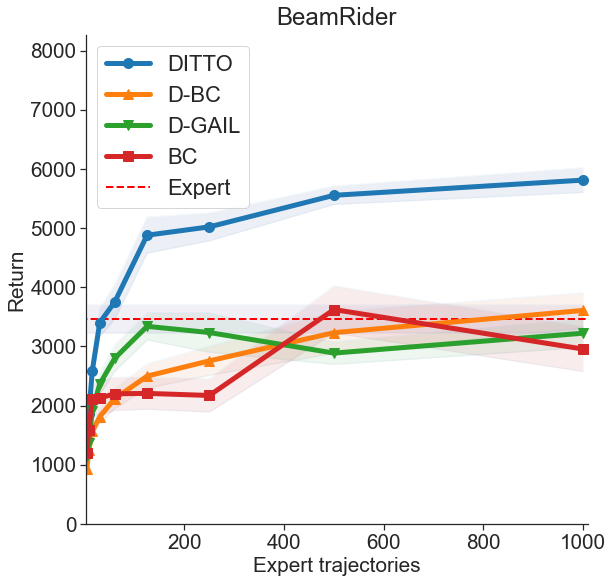}
    \includegraphics[width=0.45\textwidth]{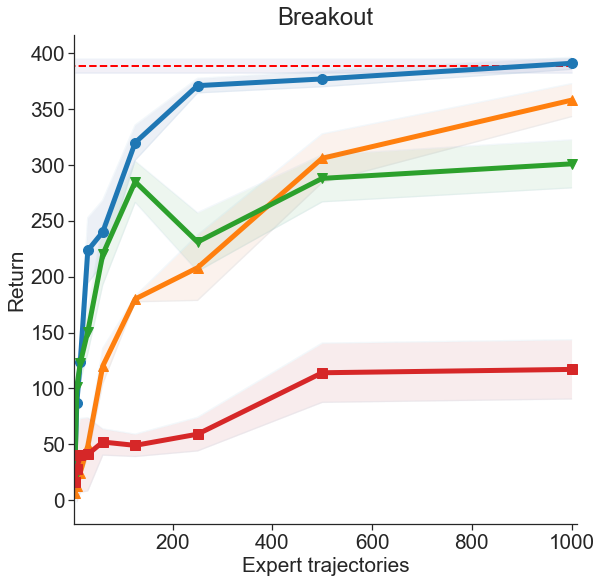}
    \includegraphics[width=0.32\textwidth]{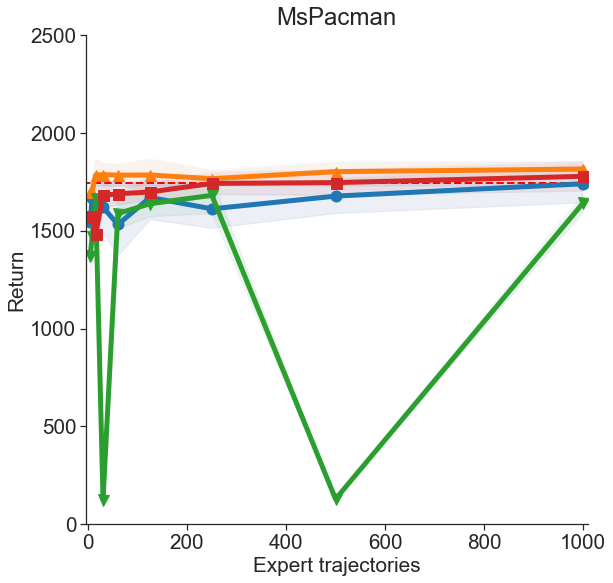}
    \includegraphics[width=0.32\textwidth]{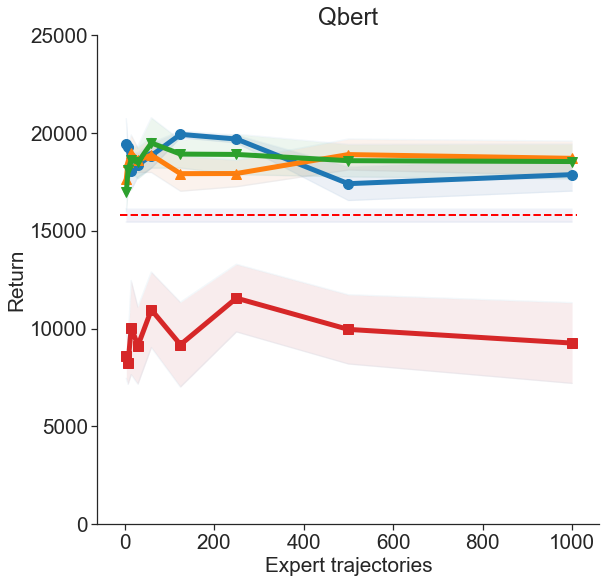}
    \includegraphics[width=0.32\textwidth]{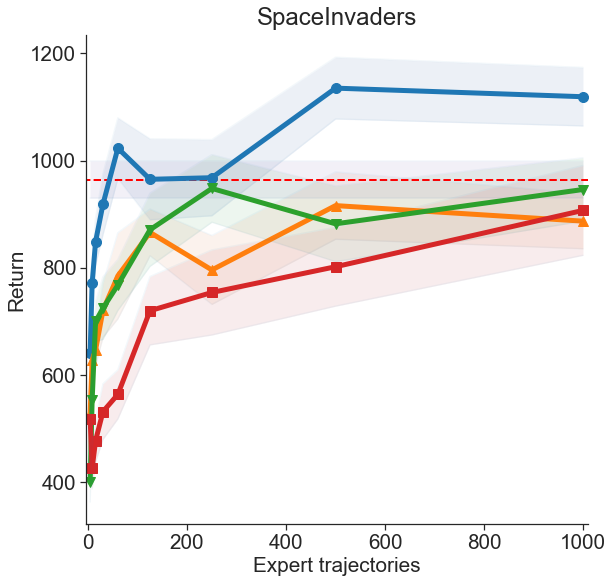}
    \caption{Results on five Atari environments from pixels, with fixed horizon $H=15$. In all environments, DITTO matches or exceeds expert performance, and matches or exceeds all baselines. In MsPacman and Qbert, all model-based methods immediately recover expert performance with minimal data. In MsPacman, we observe adversarial collapse of D-GAIL. We follow \cite{Agarwal2021} for offline policy evaluation, and report the mean reward achieved across 10 gradient steps with 20 validation simulations, to avoid lottery-ticket policy results. Shaded regions show $\pm 1$ standard error. The experts are strong pre-trained PPO agents from the RL Baselines3 Zoo.}
    \label{fig:results}
\end{figure}

\section{Conclusion}
Imitation learning algorithms must deal with offline learning, high-dimensional observations spaces, and covariate shift to graduate to real-world deployment. In this work we proposed DITTO, an algorithm which addresses these problems using world models. DITTO achieves greater performance, and superior sample efficiency compared to strong baselines which we introduce for the offline setting. DITTO is the first offline imitation learning algorithm to solve these difficult Atari environments from pixels. Model-based methods are typically thought to cause generalization challenges, since agents trained in a learned model can learn to exploit generalization failures of both the dynamics or learned reward function. In contrast, our formulation encourages learners to return to the data distribution using a simple fixed reward function defined in the model latent space. By learning under their own distribution, DITTO policies mitigate policy-induced covariate shift. Addressing the combined difficulties of high-dimensional partially observable environments and offline learning are key challenges to scale imitation learning to real world challenges.

\bibliography{main}

\begin{thebibliography}{34}
\providecommand{\natexlab}[1]{#1}
\providecommand{\url}[1]{\texttt{#1}}
\expandafter\ifx\csname urlstyle\endcsname\relax
  \providecommand{\doi}[1]{doi: #1}\else
  \providecommand{\doi}{doi: \begingroup \urlstyle{rm}\Url}\fi

\bibitem[Agarwal et~al.(2021)Agarwal, Schwarzer, Castro, Courville, and Bellemare]{Agarwal2021}
Rishabh Agarwal, Max Schwarzer, Pablo~Samuel Castro, Aaron~C. Courville, and Marc~G. Bellemare.
\newblock Deep reinforcement learning at the edge of the statistical precipice.
\newblock In \emph{Neural Information Processing Systems}, 2021.

\bibitem[Bengio et~al.(2013)Bengio, L{\'e}onard, and Courville]{straight-through}
Yoshua Bengio, Nicholas L{\'e}onard, and Aaron~C. Courville.
\newblock Estimating or propagating gradients through stochastic neurons for conditional computation.
\newblock \emph{ArXiv}, abs/1308.3432, 2013.

\bibitem[Brantley et~al.(2020)Brantley, Sun, and Henaff]{DRIL}
Kiant{\'e} Brantley, Wen Sun, and Mikael Henaff.
\newblock Disagreement-regularized imitation learning.
\newblock In \emph{ICLR}, 2020.

\bibitem[Brock et~al.(2019)Brock, Donahue, and Simonyan]{brock2018large}
Andrew Brock, Jeff Donahue, and Karen Simonyan.
\newblock Large scale {GAN} training for high fidelity natural image synthesis.
\newblock In \emph{International Conference on Learning Representations}, 2019.
\newblock URL \url{https://openreview.net/forum?id=B1xsqj09Fm}.

\bibitem[Burda et~al.(2019)Burda, Edwards, Storkey, and Klimov]{rnd}
Yuri Burda, Harrison Edwards, Amos~J. Storkey, and Oleg Klimov.
\newblock Exploration by random network distillation.
\newblock \emph{ArXiv}, abs/1810.12894, 2019.

\bibitem[Ciosek(2022)]{ILbyRL}
Kamil Ciosek.
\newblock Imitation learning by reinforcement learning.
\newblock In \emph{ICLR}, 2022.
\newblock URL \url{https://openreview.net/forum?id=1zwleytEpYx}.

\bibitem[Fu et~al.(2018)Fu, Luo, and Levine]{AIRL}
Justin Fu, Katie Luo, and Sergey Levine.
\newblock Learning robust rewards with adversarial inverse reinforcement learning.
\newblock \emph{ArXiv}, abs/1710.11248, 2018.

\bibitem[Ghasemipour et~al.(2019)Ghasemipour, Zemel, and Gu]{divmin}
Seyed Kamyar~Seyed Ghasemipour, Richard~S. Zemel, and Shixiang~Shane Gu.
\newblock A divergence minimization perspective on imitation learning methods.
\newblock In \emph{CoRL}, 2019.

\bibitem[Goodfellow et~al.(2014)Goodfellow, Pouget-Abadie, Mirza, Xu, Warde-Farley, Ozair, Courville, and Bengio]{gans}
Ian~J. Goodfellow, Jean Pouget-Abadie, Mehdi Mirza, Bing Xu, David Warde-Farley, Sherjil Ozair, Aaron~C. Courville, and Yoshua Bengio.
\newblock Generative adversarial nets.
\newblock In \emph{NIPS}, 2014.

\bibitem[Ha \& Schmidhuber(2018)Ha and Schmidhuber]{worldmodels}
David~R Ha and J{\"u}rgen Schmidhuber.
\newblock Recurrent world models facilitate policy evolution.
\newblock \emph{ArXiv}, abs/1809.01999, 2018.

\bibitem[Hafner et~al.(2020)Hafner, Lillicrap, Ba, and Norouzi]{dreamer}
Danijar Hafner, Timothy~P. Lillicrap, Jimmy Ba, and Mohammad Norouzi.
\newblock Dream to control: Learning behaviors by latent imagination.
\newblock \emph{ArXiv}, abs/1912.01603, 2020.

\bibitem[Hafner et~al.(2021)Hafner, Lillicrap, Norouzi, and Ba]{dreamerv2}
Danijar Hafner, Timothy~P. Lillicrap, Mohammad Norouzi, and Jimmy Ba.
\newblock Mastering atari with discrete world models.
\newblock \emph{ArXiv}, abs/2010.02193, 2021.

\bibitem[Hester et~al.(2017)Hester, Vecer{\'i}k, Pietquin, Lanctot, Schaul, Piot, Sendonaris, Dulac-Arnold, Osband, Agapiou, Leibo, and Gruslys]{Hester2017}
Todd Hester, Matej Vecer{\'i}k, Olivier Pietquin, Marc Lanctot, Tom Schaul, Bilal Piot, Andrew Sendonaris, Gabriel Dulac-Arnold, Ian Osband, John~P. Agapiou, Joel~Z. Leibo, and Audrunas Gruslys.
\newblock Learning from demonstrations for real world reinforcement learning.
\newblock \emph{ArXiv}, abs/1704.03732, 2017.

\bibitem[Ho \& Ermon(2016)Ho and Ermon]{GAIL}
Jonathan Ho and Stefano Ermon.
\newblock Generative adversarial imitation learning.
\newblock In \emph{NIPS}, 2016.

\bibitem[Hwangbo et~al.(2019)Hwangbo, Lee, Dosovitskiy, Bellicoso, Tsounis, Koltun, and Hutter]{actuatornet}
Jemin Hwangbo, Joonho Lee, Alexey Dosovitskiy, Dario Bellicoso, Vassilios Tsounis, Vladlen Koltun, and Marco Hutter.
\newblock Learning agile and dynamic motor skills for legged robots.
\newblock \emph{Science Robotics}, 4\penalty0 (26):\penalty0 eaau5872, 2019.
\newblock \doi{10.1126/scirobotics.aau5872}.
\newblock URL \url{https://www.science.org/doi/abs/10.1126/scirobotics.aau5872}.

\bibitem[Kanervisto et~al.(2020)Kanervisto, Pussinen, and Hautam{\"a}ki]{BCbench}
Anssi Kanervisto, Joonas Pussinen, and Ville Hautam{\"a}ki.
\newblock Benchmarking end-to-end behavioural cloning on video games.
\newblock \emph{2020 IEEE Conference on Games (CoG)}, pp.\  558--565, 2020.

\bibitem[Kidambi et~al.(2021)Kidambi, Chang, and Sun]{mobile}
Rahul Kidambi, Jonathan Chang, and Wen Sun.
\newblock Mobile: Model-based imitation learning from observation alone.
\newblock In \emph{NeurIPS}, 2021.

\bibitem[Kim et~al.(2022)Kim, Seo, Lee, Jeon, Hwang, Yang, and Kim]{DemoDICE}
Geon-Hyeong Kim, Seokin Seo, Jongmin Lee, Wonseok Jeon, HyeongJoo Hwang, Hongseok Yang, and Kee-Eung Kim.
\newblock Demodice: Offline imitation learning with supplementary imperfect demonstrations.
\newblock In \emph{International Conference on Learning Representations}, 2022.
\newblock URL \url{https://api.semanticscholar.org/CorpusID:251647997}.

\bibitem[Kostrikov et~al.(2019)Kostrikov, Nachum, and Tompson]{ValueDICE}
Ilya Kostrikov, Ofir Nachum, and Jonathan Tompson.
\newblock Imitation learning via off-policy distribution matching.
\newblock \emph{ArXiv}, abs/1912.05032, 2019.
\newblock URL \url{https://api.semanticscholar.org/CorpusID:209202457}.

\bibitem[Mnih et~al.(2015)Mnih, Kavukcuoglu, Silver, Rusu, Veness, Bellemare, Graves, Riedmiller, Fidjeland, Ostrovski, Petersen, Beattie, Sadik, Antonoglou, King, Kumaran, Wierstra, Legg, and Hassabis]{mnih2015}
Volodymyr Mnih, Koray Kavukcuoglu, David Silver, Andrei~A. Rusu, Joel Veness, Marc~G. Bellemare, Alex Graves, Martin~A. Riedmiller, Andreas Fidjeland, Georg Ostrovski, Stig Petersen, Charlie Beattie, Amir Sadik, Ioannis Antonoglou, Helen King, Dharshan Kumaran, Daan Wierstra, Shane Legg, and Demis Hassabis.
\newblock Human-level control through deep reinforcement learning.
\newblock \emph{Nature}, 518:\penalty0 529--533, 2015.

\bibitem[Mnih et~al.(2016)Mnih, Badia, Mirza, Graves, Lillicrap, Harley, Silver, and Kavukcuoglu]{mnih2016}
Volodymyr Mnih, Adri{\`a}~Puigdom{\`e}nech Badia, Mehdi Mirza, Alex Graves, Timothy~P. Lillicrap, Tim Harley, David Silver, and Koray Kavukcuoglu.
\newblock Asynchronous methods for deep reinforcement learning.
\newblock In \emph{ICML}, 2016.

\bibitem[Pomerleau(1989)]{ALVINN}
Dean~A. Pomerleau.
\newblock Alvinn: An autonomous land vehicle in a neural network.
\newblock In \emph{NeurIPS}, 1989.

\bibitem[Rafailov et~al.(2021)Rafailov, Yu, Rajeswaran, and Finn]{vmail}
Rafael Rafailov, Tianhe Yu, Aravind Rajeswaran, and Chelsea Finn.
\newblock Visual adversarial imitation learning using variational models.
\newblock \emph{ArXiv}, abs/2107.08829, 2021.

\bibitem[Raffin(2020)]{rl-zoo3}
Antonin Raffin.
\newblock Rl baselines3 zoo.
\newblock \url{https://github.com/DLR-RM/rl-baselines3-zoo}, 2020.

\bibitem[Reddy et~al.(2020)Reddy, Dragan, and Levine]{sqil}
Siddharth Reddy, Anca~D. Dragan, and Sergey Levine.
\newblock Sqil: Imitation learning via reinforcement learning with sparse rewards.
\newblock In \emph{International Conference on Learning Representations}, 2020.
\newblock URL \url{https://openreview.net/forum?id=S1xKd24twB}.

\bibitem[Ross \& Bagnell(2010)Ross and Bagnell]{Bagnell1}
St{\'e}phane Ross and J.~Andrew Bagnell.
\newblock Efficient reductions for imitation learning.
\newblock In \emph{AISTATS}, 2010.

\bibitem[Ross et~al.(2011)Ross, Gordon, and Bagnell]{Bagnell2}
St{\'e}phane Ross, Geoffrey~J. Gordon, and J.~Andrew Bagnell.
\newblock A reduction of imitation learning and structured prediction to no-regret online learning.
\newblock In \emph{AISTATS}, 2011.

\bibitem[Sasaki \& Yamashina(2021)Sasaki and Yamashina]{noisybc}
Fumihiro Sasaki and Ryota Yamashina.
\newblock Behavioral cloning from noisy demonstrations.
\newblock In \emph{ICLR}, 2021.

\bibitem[Schulman et~al.(2017)Schulman, Wolski, Dhariwal, Radford, and Klimov]{PPO}
John Schulman, Filip Wolski, Prafulla Dhariwal, Alec Radford, and Oleg Klimov.
\newblock Proximal policy optimization algorithms.
\newblock \emph{ArXiv}, abs/1707.06347, 2017.

\bibitem[Sutton \& Barto(2005)Sutton and Barto]{suttonbarto}
Richard~S. Sutton and Andrew~G. Barto.
\newblock Reinforcement learning: An introduction.
\newblock \emph{IEEE Transactions on Neural Networks}, 16:\penalty0 285--286, 2005.

\bibitem[Wang et~al.(2019)Wang, Ciliberto, Amadori, and Demiris]{randomexpert}
Ruohan Wang, Carlo Ciliberto, Pierluigi~Vito Amadori, and Y.~Demiris.
\newblock Random expert distillation: Imitation learning via expert policy support estimation.
\newblock In \emph{ICML}, 2019.

\bibitem[Williams(2004)]{reinforce}
Ronald~J. Williams.
\newblock Simple statistical gradient-following algorithms for connectionist reinforcement learning.
\newblock \emph{Machine Learning}, 8:\penalty0 229--256, 2004.

\bibitem[Wu et~al.(2022)Wu, Escontrela, Hafner, Goldberg, and Abbeel]{DayDreamer}
Philipp Wu, Alejandro Escontrela, Danijar Hafner, Ken Goldberg, and P.~Abbeel.
\newblock Daydreamer: World models for physical robot learning.
\newblock \emph{ArXiv}, abs/2206.14176, 2022.

\bibitem[Zhang et~al.(2020)Zhang, Liu, Guan, Zhang, Hayhoe, and Ballard]{atarihead}
Ruohan Zhang, Zhuode Liu, L.~Guan, Luxin Zhang, Mary~M. Hayhoe, and Dana~H. Ballard.
\newblock Atari-head: Atari human eye-tracking and demonstration dataset.
\newblock \emph{Proceedings of the ... AAAI Conference on Artificial Intelligence. AAAI Conference on Artificial Intelligence}, 34 4:\penalty0 6811--6820, 2020.

\end{thebibliography}
\bibliographystyle{tmlr}

\appendix
\section{Proof of divergence reward bound}\label{appendixproof}
We prove a corollary of proposition 1 from \citet{ILbyRL}. \citet{ILbyRL} uses many intermediate results and definitions, so we encourage the reader to reference their work while reading to understand this proof.
\begin{corollary}
Suppose we also have another imitation learner, which uses the same data-set of size N, and still satisfies Assumption 3, but instead trains on some other intrinsic reward, $R_{\text{int}}'$ which satisfies (for some $\epsilon > 0)$:
\begin{align*}
    &R_{\text{int}}'(s,a) = 1 , \forall (s,a)\in D\\
    & 0 \leq R_{\text{int}}'(s,a) \leq 1 - \epsilon, \text{otherwise} \
\end{align*}
Let $\rho^J$ be the limiting state-action distribution of this imitation learner. Then:
\begin{equation*}
    ||\rho^J - \rho^E||_{\text{TV}} \leq \frac{\eta}{\epsilon}
\end{equation*}
\begin{equation*}
    \mathbb{E}_{\rho^J}[R] \geq  \mathbb{E}_{\rho^E}[R] -  \frac{\eta}{\epsilon}
\end{equation*}

\end{corollary}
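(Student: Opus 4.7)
The plan is to derive the corollary as a parameterized refinement of Proposition~1 of \citet{ILbyRL}. In that proposition, the intrinsic reward is the sharp indicator $\mathbf{1}[(s,a)\in D]$, which exhibits a gap of exactly $1$ between its on- and off-support values. The relaxed reward $R_{\text{int}}'$ here has on/off gap at least $\epsilon$, so I expect the whole bookkeeping to carry through with the single substitution $1 \mapsto \epsilon$, producing the $1/\epsilon$ factor in the two conclusions. Concretely I would walk through each line of the original proof and verify that the only place the reward gap enters is the step that converts near-optimality in intrinsic return into a bound on the mass the learner places outside $D$.

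First I would observe that, because the expert's state-action distribution $\rho^E$ is supported in $D$, $\mathbb{E}_{\rho^E}[R_{\text{int}}'] = 1$. Assumption~3, which I inherit unchanged, then says the RL subroutine returns a policy whose limiting distribution $\rho^J$ satisfies $\mathbb{E}_{\rho^J}[R_{\text{int}}'] \geq 1 - \eta$. Partitioning on $\{D, D^c\}$ and using $R_{\text{int}}'(s,a) \leq 1 - \epsilon$ off $D$,
\begin{equation*}
1 - \eta \;\leq\; \mathbb{E}_{\rho^J}[R_{\text{int}}'] \;\leq\; \rho^J(D) + (1-\epsilon)\rho^J(D^c) \;=\; 1 - \epsilon\,\rho^J(D^c),
\end{equation*}
so $\rho^J(D^c) \leq \eta/\epsilon$. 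This is the quantitative substitute for the $\rho^J(D^c) \leq \eta$ step in the original proof.

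From here, the TV bound $\lVert \rho^J - \rho^E \rVert_{\text{TV}} \leq \eta/\epsilon$ follows by running the original within-$D$ argument from Proposition~1 unchanged: since neither $\rho^E$ nor the combinatorial structure of $D$ depends on the specific reward values, the only quantity that feeds into their TV bound is the mass $\rho^J(D^c)$ derived above. The extrinsic-reward statement is then the standard inequality $\lvert \mathbb{E}_{\rho^J}[R] - \mathbb{E}_{\rho^E}[R] \rvert \leq \lVert R \rVert_\infty \lVert \rho^J - \rho^E \rVert_{\text{TV}}$ combined with the normalization $\lVert R \rVert_\infty \leq 1$ used in \citet{ILbyRL}.

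The main obstacle I anticipate is verifying that the within-$D$ portion of the original proof really is insensitive to the relaxation, i.e. that it only consumes $\rho^J(D^c)$ as a black box and does not implicitly exploit $R_{\text{int}} = 0$ strictly off $D$. If it does, I expect the patch is still mechanical: every use of the off-support reward in such arguments quantifies how fast suboptimality translates into off-support leakage, and that role is played here by $\epsilon$, so substituting $\epsilon$ for $1$ in those estimates should close the proof.
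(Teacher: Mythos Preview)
Your proposal is correct and matches the paper's approach: the key computation in both is exactly your inequality $1-\eta \le \rho^J(D) + (1-\epsilon)\rho^J(D^c) = 1 - \epsilon\,\rho^J(D^c)$, which the paper phrases in terms of an agreement probability $p_a$ and writes as $p_a \ge 1 - \kappa/\epsilon$. The paper is slightly more explicit in locating the modification inside Ciosek's argument---it points to Lemma~5 (unchanged, since $R_{\text{int}}' \ge R_{\text{int}}$) and to one paragraph after equation~(25) in the proof of Lemma~7 as the sole place the reward gap is used---and it obtains the extrinsic-reward bound by carrying the $1/\epsilon$ factor through the mixing-time inequality of Lemma~7 rather than via your TV-then-H\"older route, but the substance is the same.
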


\begin{proof}
Lemma 5 trivially still holds with $R'_\text{int}$ instead of $R_\text{int}$, as $R'_\text{int} \geq R_\text{int} $ always, $\forall \rho, \mathbb{E}_\rho[R'_\text{int}] \geq \mathbb{E}_\rho[R_\text{int}]$. Hence the bound holding true for $\mathbb{E}_{\rho^I}[R_\text{int}]$ implies it holds for $\mathbb{E}_{\rho^I}[R'_\text{int}]$ too. \\

Lemma 7 holds with $\kappa$ replaced by $\frac{\kappa}{\epsilon}$, so the result is $\mathbb{E}_{\rho^I}[R] \geq (1 - \frac{\kappa}{\epsilon})\mathbb{E}_{\rho^E}[R] - 4 \tau_{\text{mix}}\frac{\kappa}{\epsilon}$. We do this by considering their proof in Appendix D. The properties of the intrinsic reward are utilised in just one paragraph, after equation 25. This is done in stating that $\sum_\ell \frac{\ell M_\ell}{T} \rightarrow \mathbb{E}_{\rho^I}[R_\text{int}]$ and $\frac{B+1}{T} \rightarrow 1 - \mathbb{E}_{\rho^I}[R_\text{int}]$. This is not true for $R'_\text{int}$. Let $p_a$ be the limiting chance of the expert agreeing with the $R'_\text{int}$ imitation agent. Almost by definition, $\sum_\ell \frac{\ell M_\ell}{T} \rightarrow p_a$ and $\frac{B+1}{T} \rightarrow 1 - p_a$. \\

Note that $\mathbb{E}_{\rho^I}[R'_\text{int}] \leq p_a + (1-p_a)(1- \epsilon)$; we yield a reward of 1 every time we agree, and at most $1- \epsilon$ if we disagree. Hence, using $1- \kappa = \mathbb{E}_{\rho^I}[R'_\text{int}]$, we have $1- \kappa \leq p_a + (1-p_a)(1- \epsilon) = 1 - \epsilon + p_a \epsilon$, hence $p_a \geq 1 - \frac{\kappa}{\epsilon}$.\\

So, taking limits as done in the original proof, we have:
\begin{align*}
    \mathbb{E}_{\rho^I}[R] &\geq p_a \mathbb{E}_{\rho^E}[R] - (1- p_a)4\tau_\text{mix} - 0\\
    &=p_a \geq p_a (\mathbb{E}_{\rho^E}[R] +4\tau_\text{mix})  -4\tau_\text{mix}\\
    & \geq (1-\frac{\kappa}{\epsilon}) (\mathbb{E}_{\rho^E}[R] +4\tau_\text{mix})  -4\tau_\text{mix}
\end{align*}

Now,  combining these lemmas is exactly as in section 4.4 in \citet{ILbyRL}. The factor of $\frac{1}{\epsilon}$ carries forward, yielding $\mathbb{E}_{\rho^J}[R] \geq  \mathbb{E}_{\rho^E}[R] -  \frac{\eta}{\epsilon}$ as required. \\
\end{proof}

\section{World Model Architecture}

We adapt the recurrent state space model (RSSM) introduced by \cite{dreamerv2}. The RSSM components are:
\begin{equation}\label{modelcomponents}
\begin{aligned}
    &\text{Model state:} &&\hat{s}_t = (h_t, z_t)\\
    &\text{Recurrent state:} &&h_t = f_\phi(\hat{s}_{t-1}, a_{t-1})\\
    &\text{Prior predictor:} &&\hat{z}_t \sim p_\phi(\hat{z}_t \mid h_t) \\
    &\text{Posterior predictor:} &&z_t \sim q_\phi(z_t \mid h_t, x_t) \\
    &\text{Image reconstruction:} &&\hat{x}_t \sim p_\phi(\hat{x}_t \mid \hat{s}_t)
\end{aligned}
\end{equation}
All components are implemented as neural networks, with a combined parameter vector $\phi$. Since the prior model predicts the current model state using only the previous action and recurrent state, without using the current observation, we can use it to learn behaviors without access to observations or decoding back into observation space. The prior and posterior models predict categorical distributions which are optimized with straight-through gradient estimation \citep{straight-through}. All components of the model are trained jointly with a modified ELBO objective:
\begin{equation}\label{ELBO}
    \min_{\phi}\mathbb{E}_{q_\phi(z_{1:T} \mid a_{1:T}, x_{1:T})}\left[
    \sum_{t=1}^T -\log p_\phi(x_t \mid \hat{s}_t) + \beta
    \klb{q_\phi(z_t \mid \hat{s}_t)}{ p_\phi(\hat{z}_t \mid h_t)}
    \right]
\end{equation}
where $\klb{q}{p}$ denotes KL balancing \citep{dreamerv2}, which is used to control the regularization of prior and posterior towards each other with a parameter $\delta$,
\begin{equation}
    \klb{q}{p} = \delta \underbrace{\kl{q}{sg(p)}}_{\text{posterior regularizer}} + (1-\delta)\underbrace{\kl{sg(q)}{p}}_{\text{prior regularizer}}
\end{equation}
and $sg(\cdot)$ is the stop gradient operator. The idea behind KL balancing is that the prior and posterior should not be regularized at the same rate: the prior should update more quickly towards the posterior, which encodes strictly more information.

\section{Agent Architecture}

The agent is composed of a stochastic actor which samples actions from a learned policy with parameter vector $\theta$, and a deterministic critic which predicts the expected discounted sum of future rewards the actor will achieve from the current state with parameter vector $\psi$. Both the actor and critic condition only on the current model state $\hat{s}_t$, which is Markovian:
\begin{equation}
\begin{aligned}
    &\text{Actor:} &&a_t \sim \pi_\theta(a_t \mid \hat{s}_t)\\
    &\text{Critic:} &&v_\psi(\hat{s}_t) \approx \mathbb{E}_{\pi_\theta, p_\phi}[\Sigma_{t=0}^H \gamma^t r_t] \\
\end{aligned}
\end{equation}
We train the critic to regress the $\lambda$-target \citep{suttonbarto}
\begin{equation}\label{lambdareturn}
   V^{\lambda}_t = r_t + \gamma \left( (1-\lambda)v_\psi(\hat{s}_{t+1}) + \lambda V^\lambda_{t+1}\right), \quad V^\lambda_{t+H} = v_\psi(\hat{s}_{t+H}) 
\end{equation}
which lets us control the temporal-difference (TD) learning horizon with the hyperparameter $\lambda$. Setting $\lambda = 0$ recovers 1-step TD learning, while $\lambda = 1$ recovers unbiased Monte Carlo returns, and intermediate values represent an exponentially weighted sum of n-step returns. In practice we use $\lambda = 0.95$. To train the critic, we regress the $\lambda$-target directly with the objective:
\begin{equation}\label{critictarget}
    \min_\psi \mathbb{E}_{\pi_\theta, p_\phi}\left[ \sum_{t=1}^{H-1}\tfrac{1}{2} ( v_\psi(\hat{s}_t) - sg(V^\lambda_t))^2\right]
\end{equation}
There is no loss on the last time step since the target equals the critic there. We follow \citet{mnih2015}, who suggest using a copy of the critic which updates its weights slowly, called the target network, to provide the value bootstrap targets. %critic target update rate

The actor is trained to maximize the discounted sum of rewards predicted by the critic. We train the actor to maximize the same $\lambda$-target as the critic, and add an entropy regularization term to encourage exploration and prevent policy collapse. We optimize the actor using REINFORCE gradients \citep{reinforce} and subtract the critic value predictions from the $\lambda$-targets for variance reduction. The full actor loss function is:
\begin{equation}\label{actorloss}
    \mathcal{L}(\theta) = \mathbb{E}_{\pi_\theta, p_\phi}\left[ 
    \sum_{t=1}^{H-1} \underbrace{- \log \pi_\theta(a_t \mid \hat{s}_t) sg(V^\lambda_t - v_\psi(\hat{s}_t))}_{\text{reinforce}} - \underbrace{\eta H(\pi_\theta(\hat{s}_t))}_{\text{entropy regularizer}}
    \right]
\end{equation}

\section{Hyperparameters}

\begin{table}[h]
\caption{Experimental hyperparameters}
\label{sample-table}
\begin{center}
\begin{tabular}{lll}
\multicolumn{1}{c}{\bf Description}  &\multicolumn{1}{c}{\bf Symbol} &\multicolumn{1}{c}{\bf Value}
\\ \hline \\
Number of world model training episodes         &$N$  &1000\\
Number of expert training episodes         &$N^E$  &$\{4,8,15,30,60,125,250,500,1000\}$\\
World model training batch size         &$B_{wm}$  &50\\
World model training sequence length         &$L$  &50\\
Agent training batch size         &$B_{ac}$  &512\\
Agent training horizon         &$H$  &15\\
Discount factor         &$\gamma$  &$0.95$\\
$TD(\lambda)$ parameter         &$\lambda$  &$0.95$\\
KL-Balancing weight         &$\beta$  &$0.1$\\
KL-Balancing trade-off parameter         &$\delta$  &$0.8$\\
Actor-critic entropy weight         &$\eta$  &$5\times10^{-2}$\\
Behavior cloning entropy weight         &$\eta_{BC}$  &$0.1$\\
Optimizer         &-  &Adam\\
All learning rates         &-  &$3\times10^{-4}$\\
Actor-critic target network update rate &-  &100 steps\\
\end{tabular}
\end{center}
\end{table}

\section{Additional Results}
\begin{figure}
    \centering
    \includegraphics[width=0.45\textwidth]{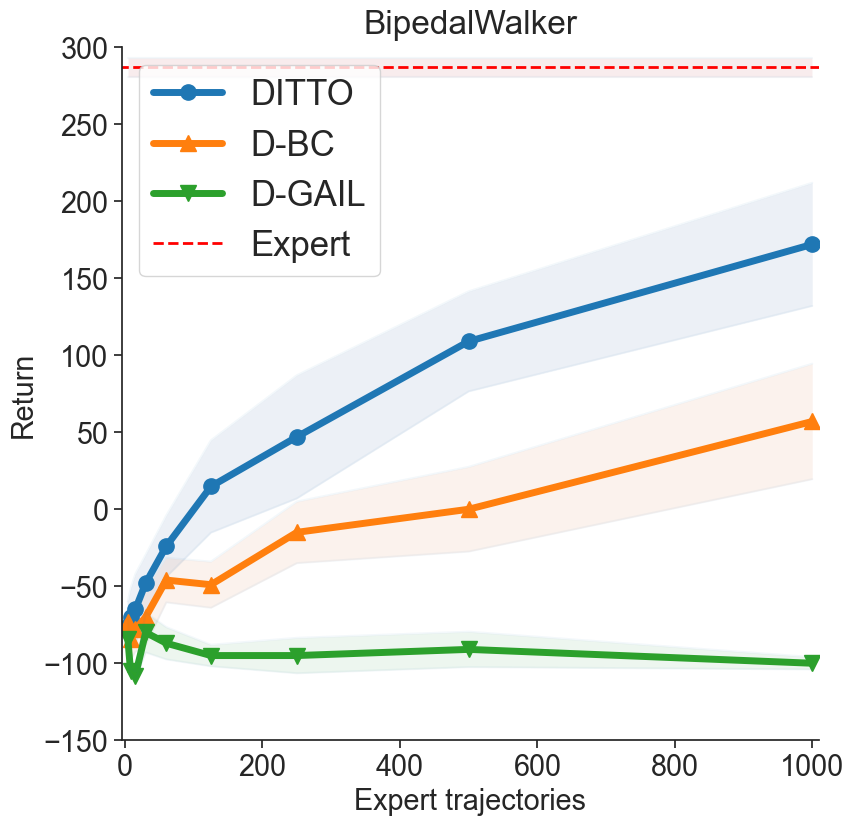}
    \includegraphics[width=0.45\textwidth]{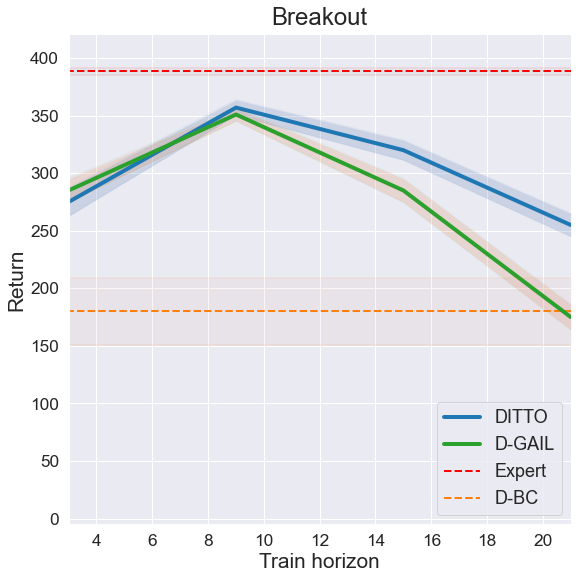}
    \caption{\textbf{Left:} Results on continuous control environment BipedalWalker, from pixels. \textbf{Right:} Training time horizon ablation. Note that both DITTO and D-GAIL achieve their maximum performance at a similar training time horizon. We conjecture that this hyperparameter is environment-specific, and report results for all environments with fixed $H=15$.}
    \label{fig:results2}
\end{figure}

\end{document}